\theoremstyle{plain}
\newtheorem{theorem}{Theorem}
\newtheorem{lemma}{Lemma}
\newtheorem{proposition}{Proposition}
\theoremstyle{definition}
\newtheorem{definition}{Definition}
\newtheorem{problem}{Problem}
\newcommand{\Sec}[1]{Section~\ref{sec:#1}}
\newcommand{\Eq}[1]{Eq.~(\ref{eq:#1})}
\newcommand{\Alg}[1]{Alg.~(\ref{alg:#1})}
\newcommand{\Fig}[1]{Figure~\ref{fig:#1}}
\newcommand{\Definition}[1]{Definition~\ref{def:#1}}
\newcommand{\Theorem}[1]{Theorem~\ref{th:#1}}
\newcommand{\Proposition}[1]{Proposition~\ref{prop:#1}}
\newcommand{\Lemma}[1]{Lemma~\ref{lem:#1}}
\newcommand{\Problem}[1]{Problem~\ref{problem:#1}}
\newcommand{\BEAS}{\begin{eqnarray*}}
\newcommand{\EEAS}{\end{eqnarray*}}
\newcommand{\BEA}{\begin{eqnarray}}
\newcommand{\EEA}{\end{eqnarray}}
\newcommand{\BEQ}{\begin{equation}}
\newcommand{\EEQ}{\end{equation}}
\newcommand{\BEQS}{\begin{equation*}}
\newcommand{\EEQS}{\end{equation*}}
\newcommand{\BIT}{\begin{itemize}}
\newcommand{\EIT}{\end{itemize}}
\newcommand{\BNUM}{\begin{enumerate}}
\newcommand{\ENUM}{\end{enumerate}}
\newcommand{\BA}{\begin{array}}
\newcommand{\EA}{\end{array}}
\newcommand{\set}[1]{\llbracket#1\rrbracket}
\newcommand{\diag}{\mathop{\rm diag}}
\newcommand{\scprod}[2]{\langle#1,#2\rangle}
\def \E{{\mathbb E}}
\def \R{{\mathbb R}}
\newcommand{\Exp}[1]{\E\left[#1\right]}
\def \rt{\widetilde{r}}
\newcommand{\ie}{i.e.\ }
\newcommand{\eg}{e.g.\ }
\newcommand{\wrt}{w.r.t.\ }
\newcommand{\NP}{\mathbf{NP}}
\newcommand{\lipcst}{\mathbf{LIP\textrm{-}CST}\xspace}
\newcommand{\Sigmat}{\widetilde{\Sigma}}
\newcommand{\suchthat}{:}
\DeclareMathOperator*{\maximize}{maximize}
\DeclareMathOperator*{\st}{s.t.}
\DeclareMathOperator{\jac}{D}
\newcommand{\lip}[1]{L(#1)}
\title{Lipschitz regularity of deep neural networks:\\analysis and efficient estimation}
\author{
  Kevin Scaman\\
  Huawei Noah's Ark Lab\\
  \texttt{kevin.scaman@huawei.com}\\
  \And
  Aladin Virmaux\\
  Huawei Noah's Ark Lab\\
  \texttt{aladin.virmaux@huawei.com}\\
}
\begin{document}

\maketitle

\begin{abstract}
Deep neural networks are notorious for being sensitive to small well-chosen
perturbations, and estimating the regularity of such architectures is of utmost
importance for safe and robust practical applications.
In this paper, we investigate one of the key characteristics to assess the
regularity of such methods: the Lipschitz constant of deep learning
architectures.
First, we show that, even for two layer neural networks, the exact computation
of this quantity is NP-hard and state-of-art methods may significantly
overestimate it.
Then, we both extend and improve previous estimation methods by providing
\emph{AutoLip}, the first generic algorithm for upper bounding the Lipschitz
constant of any automatically differentiable function.
We provide a power method algorithm working with automatic differentiation,
allowing efficient computations even on large convolutions.
Second, for sequential neural networks, we propose an improved algorithm named
\emph{SeqLip} that takes advantage of the linear computation graph to split the
computation per pair of consecutive layers.
Third we propose heuristics on \emph{SeqLip} in order to tackle very large networks.
Our experiments show that \emph{SeqLip} can significantly improve on the
existing upper bounds.
Finally, we provide an implementation of \emph{AutoLip} in the \emph{PyTorch}
environment that may be used to better estimate the robustness of a given
neural network to small perturbations or regularize it using more precise
Lipschitz estimations.
\end{abstract}

\section{Introduction}
\label{sec:intro}

Deep neural networks made a striking entree in machine learning and quickly became
state-of-the-art algorithms in many tasks such as computer
vision \citep{krizhevsky2012imagenet,szegedy2016rethinking,he2016deep,huang2017densely},
speech recognition and generation \citep{graves2014towards,van2016wavenet} or
natural language processing \citep{mikolov2013linguistic, vaswani2017attention}.

However, deep neural networks are known for being very sensitive to their input, and \emph{adversarial examples} provide a good illustration of their lack of robustness \citep{szegedy2013intriguing, goodfellow2014explaining}. Indeed, a well-chosen small perturbation
of the input image can mislead a neural network and significantly decrease its classification accuracy.
One metric to assess the robustness of neural networks to small perturbations
is the \emph{Lipschitz constant} (see \Definition{lipschitz}), which upper
bounds the relationship between input perturbation and output variation for a
given distance.
For generative models, the recent \emph{Wasserstein
  GAN}~\citep{2017arXiv170107875A} improved the training stability of GANs by
reformulating the optimization problem as a minimization of the Wasserstein
distance between the real and generated
distributions~\citep{villani2008optimal}.
However, this method relies on an efficient way of constraining the Lipschitz
constant of the critic, which was only partially addressed in the original
paper, and the object of several follow-up works
\citep{miyato2018spectral,gulrajani2017improved}.

Recently, Lipschitz continuity was used in order to improve the
state-of-the-art in several deep learning topics:
(1) for robust learning, avoiding adversarial attacks was achieved in
\citep{weng2018evaluating} by constraining local Lipschitz constants in neural
networks.
(2) For generative models, using spectral normalization on each layer allowed
\citep{miyato2018spectral} to successfully train a GAN on ILRSVRC2012 dataset.
(3) In deep learning theory, novel generalization bounds critically rely on the Lipschitz
constant of the neural
network~\cite{Luxburg:2004:DCL:1005332.1005357,DBLP:conf/nips/BartlettFT17,DBLP:conf/nips/NeyshaburBMS17}.

To the best of our knowledge, the first upper-bound on the
Lipschitz constant of a neural network was described in ~\cite[Section
4.3]{szegedy2013intriguing}, as the product of the spectral norms of linear
layers (a special case of our generic algorithm, see
\Proposition{upper.bound.spectral}).
More recently, the Lipschitz constant of \emph{scatter networks} was analyzed
in \cite{Balan2017LipschitzPF}.
Unfortunately, this analysis does not extend to more general architectures.

Our aim in this paper is to provide a rigorous and practice-oriented study on
how Lipschitz constants of neural networks and automatically differentiable
functions may be estimated.
We first precisely define the notion of
Lipschitz constant of vector valued
functions in \Sec{definitions}, and then show in \Sec{nphard} that its
estimation is, even for 2-layer \emph{Multi-Layer-Perceptrons} (MLP),
$\NP$-hard.
In \Sec{autolip}, we both extend and improve previous estimation methods
by providing \emph{AutoLip}, the first generic algorithm for upper bounding the
Lipschitz constant of any automatically differentiable function.
Moreover, we show how the Lipschitz constant of most neural network layers may
be computed efficiently using automatic differentiation
algorithms~\citep{rall1981automatic} and libraries such as
PyTorch~\citep{pytorch}.
Notably, we extend the power method to convolution layers using automatic
differentiation to speed-up the computations.
In \Sec{sequential}, we provide a theoretical analysis of AutoLip in the case
of sequential neural networks, and show that the upper bound may lose a
multiplicative factor \emph{per activation layer}, which may significantly
downgrade the estimation quality of AutoLip and lead to a very large and
unrealistic upper bound.
In order to prevent this, we propose an improved
algorithm called \emph{SeqLip} in the case of sequential neural networks, and
show in \Sec{exps} that SeqLip may significantly improve on AutoLip.
Finally we discuss the different algorithms on the
AlexNet~\citep{krizhevsky2012imagenet} neural network for computer vision using
the proposed algorithms.
\footnote{The code used in this paper is available at
\url{https://github.com/avirmaux/lipEstimation}.}

\section{Background and notations}
\label{sec:definitions}

In the following, we denote as $\scprod{x}{y}$ and $\|x\|_2$ the scalar product
and $L_2$-norm of the Hilbert space $\R^n$, $x\cdot y$ the coordinate-wise
product of $x$ and $y$, and $f\circ g$ the composition between the functions
$f:\R^k\rightarrow\R^m$ and $g:\R^n\rightarrow\R^k$. For any differentiable function
$f:\R^n\rightarrow\R^m$ and any point $x\in\R^n$, we will denote as $\jac_x
f\in\R^{m\times n}$ the differential operator of $f$ at $x$, also called the
\emph{Jacobian matrix}.
Note that, in the case of real valued functions (\ie $m=1$), the gradient of
$f$ is the transpose of the differential operator:
$\nabla f(x) = (\jac_x f)^\top$. Finally, $\diag_{n,m}(x)\in\R^{n\times m}$ is the rectangular matrix with $x\in\R^{\min\{n,m\}}$
along the diagonal and $0$ outside of it. When unambiguous, we will use the
notation $\diag(x)$ instead of $\diag_{n,m}(x)$. All proofs are available as
supplemental material.

\begin{definition}
  \label{def:lipschitz}
A function $f: \R^n \rightarrow \R^m$ is called \emph{Lipschitz continuous} if there exists a constant
$L$ such that
\begin{align*}
  \forall x, y \in \R^n,\ \|f(x) - f(y)\|_2 \leq L\,\|x - y\|_2.
\end{align*}
The smallest $L$ for which the previous inequality is true is called the
\emph{Lipschitz constant} of $f$ and will be denoted $\lip{f}$.
\end{definition}
For locally Lipschitz functions (\ie functions whose restriction to some
neighborhood around any point is Lipschitz), the Lipschitz constant may be
computed using its differential operator.
\begin{theorem}[{Rademacher~\cite[Theorem 3.1.6]{federer2014geometric}}]
  \label{th:definitions.lipschitz.grad}
  If $f: \R^n \rightarrow \R^m$ is a locally Lipschitz continuous function, then $f$ is
  differentiable almost everywhere. Moreover, if $f$ is Lipschitz continuous, then
  \begin{align}
    \label{eq:optimization.lipschitz}
    \lip{f} &= \sup_{x\in\R^n} \| \jac_x f \|_2
  \end{align}
  where $\|M\|_2 = \sup_{\{x~\suchthat~\|x\|=1\}} \|M x\|_2$ is the operator
  norm of the matrix $M\in\R^{m\times n}$.
\end{theorem}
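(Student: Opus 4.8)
The statement has two parts: the almost-everywhere differentiability of locally Lipschitz maps (the classical theorem of Rademacher), and the identity \eqref{eq:optimization.lipschitz}. The plan is to first establish (or recall) Rademacher's theorem and then derive the formula for $\lip{f}$ from it. Since differentiability is a local and coordinatewise property, it suffices to treat a single scalar component $g:\R^n\to\R$ that is globally Lipschitz on a ball.

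For Rademacher's theorem I would proceed in three steps. (i) For a fixed unit vector $v$, the map $t\mapsto g(x+tv)$ is Lipschitz on $\R$, hence absolutely continuous and so differentiable for a.e.\ $t$; Fubini's theorem then gives that the directional derivative $\partial_v g(x)$ exists for Lebesgue-a.e.\ $x$. Applying this to a countable dense set of directions $\{v_k\}$ and to the canonical basis, all these directional derivatives and the gradient $\nabla g(x):=(\partial_1 g(x),\dots,\partial_n g(x))$ exist simultaneously a.e. (ii) One shows that for a.e.\ $x$ the map $v\mapsto\partial_v g(x)$ is linear, equal to $\scprod{\nabla g(x)}{v}$; this is the delicate step, which I would carry out by testing against smooth compactly supported functions and using dominated convergence (legitimate thanks to the uniform bound on difference quotients) to identify $\partial_v g$ with the distributional directional derivative, which is manifestly linear in $v$. (iii) Finally, the existence of all directional derivatives together with their linearity in $v$ is upgraded to genuine (Fr\'echet) differentiability at a.e.\ $x$ by covering the unit sphere with finitely many of the $v_k$ and using the Lipschitz bound to control $g(x+h)-g(x)-\scprod{\nabla g(x)}{h}$ uniformly over directions. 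Reassembling components yields differentiability of $f$ a.e.

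For the identity, let $D:=\{x : f \text{ is differentiable at } x\}$, which has full measure, and set $M:=\sup_{x\in D}\|\jac_x f\|_2$. The inequality $M\le\lip{f}$ is immediate: for $x\in D$ and any unit $v$, dividing $\|f(x+tv)-f(x)\|_2\le\lip{f}\,|t|$ by $|t|$ and letting $t\to0$ gives $\|(\jac_x f)v\|_2\le\lip{f}$, hence $\|\jac_x f\|_2\le\lip{f}$. The reverse inequality $\lip{f}\le M$ is the main obstacle, since one would like to integrate $\jac f$ along the segment from $x$ to $y$, but that segment may lie entirely inside the null set $\R^n\setminus D$, so the naive fundamental theorem of calculus does not apply. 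I would circumvent this by mollification: let $\rho_\varepsilon$ be a standard smooth mollifier and $f_\varepsilon:=f*\rho_\varepsilon$. Because $f$ is Lipschitz, its a.e.\ Jacobian coincides with its distributional derivative, so $\jac f_\varepsilon=(\jac f)*\rho_\varepsilon$; by convexity of the operator norm and Jensen's inequality, $\|\jac_x f_\varepsilon\|_2\le M$ for every $x$. As $f_\varepsilon$ is smooth, integrating along the segment gives $\|f_\varepsilon(x)-f_\varepsilon(y)\|_2\le M\,\|x-y\|_2$ for all $x,y$. Letting $\varepsilon\to0$, continuity of $f$ gives $f_\varepsilon\to f$ locally uniformly, whence $\|f(x)-f(y)\|_2\le M\,\|x-y\|_2$, i.e.\ $\lip{f}\le M$. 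Combining the two inequalities yields \eqref{eq:optimization.lipschitz}.

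As indicated by the attribution in the statement, steps (i)–(iii) are exactly the content of Rademacher's theorem and may simply be quoted from \cite{federer2014geometric}; the only part requiring a short self-contained argument is the derivation of the Lipschitz-constant formula, i.e.\ the mollification argument above. The hardest technical point is the linearity step (ii) in Rademacher's theorem, closely followed by the need to avoid the non-differentiability null set in the reverse inequality — both resolved by convolution/mollification.
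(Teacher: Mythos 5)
Your proposal is mathematically sound, but note that the paper does not prove this statement at all: it is quoted verbatim from Federer (Theorem 3.1.6) and used as a black box, and the appendix contains no proof of it. So there is nothing in the paper to compare your argument against except the citation itself. On the merits, your sketch is the standard one and the key steps hold up. For the inequality $\sup_{x}\|\jac_x f\|_2\le \lip{f}$ the difference-quotient argument is immediate, and for the converse your mollification route is correct: the one fact doing the real work is that a Lipschitz $f$ lies in $W^{1,\infty}_{\mathrm{loc}}$ with weak derivative equal a.e.\ to the classical Jacobian, so that $\jac(f*\rho_\varepsilon)=(\jac f)*\rho_\varepsilon$ and the bound $\|\jac_x f_\varepsilon\|_2\le \operatorname*{ess\,sup}\|\jac f\|_2$ follows from the triangle inequality for vector-valued integrals (full Jensen is not needed, only $\|\int\!\cdot\,\|\le\int\|\cdot\|$ against the probability measure $\rho_\varepsilon\,dy$). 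A slightly more elementary alternative for that direction, avoiding mollifiers, is to fix $x,y$ and note via Fubini that for almost every small translate $z$ the segment from $x+z$ to $y+z$ meets the non-differentiability set in a set of one-dimensional measure zero, so the fundamental theorem of calculus applies along that segment and one concludes by continuity of $f$ as $z\to0$; both routes are standard and either would serve. The only caveat is that your steps (i)--(iii) for Rademacher's theorem proper are a sketch whose delicate point (linearity of $v\mapsto\partial_v g(x)$ a.e.\ and the upgrade to Fr\'echet differentiability) you correctly flag but do not carry out; since the paper itself simply cites Federer for exactly this, that is an acceptable place to stop.
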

In particular, if $f$ is real valued (\ie $m=1$), its
Lipschitz constant is the maximum norm of its gradient $\lip{f} = \sup_x
\|\nabla f(x)\|_2$ on its domain set.
Note that the supremum in \Theorem{definitions.lipschitz.grad} is a slight
abuse of notations, since the differential $\jac_x f$ is defined \emph{almost
  everywhere} in $\R^n$, except for a set of Lebesgue measure zero.

\section{Exact Lipschitz computation is NP-hard}
\label{sec:nphard}
In this section, we show that the exact computation of the Lipschitz constant of neural networks is $\NP$-hard, hence motivating the need for good approximation algorithms. More precisely, upper bounds are in this case more valuable as they ensure that the variation of the function, when subject to an input perturbation, remains small.
A \emph{neural network} is, in essence, a succession of linear operators and non-linear activation functions.
The most simplistic model of neural network is the
\emph{Multi-Layer-Perceptron} (MLP) as defined below.

\begin{definition}[MLP]
\label{def:MLP}
A $K$-layer
\emph{Multi-Layer-Perceptron} $f_{MLP}:\R^n\rightarrow\R^m$
is the function
\begin{align*}
  f_{MLP}(x) &= T_K \circ \rho_{K-1} \circ \cdots \circ \rho_1 \circ T_1 (x),
\end{align*}
where $T_k: x\mapsto M_k x + b_k$ is an affine function and $\rho_k: x\mapsto (g_k(x_i))_{i\in\set{1,n_k}}$ is a non-linear activation function.
\end{definition}
Many standard deep network architectures (\textit{e.g.} CNNs) follow --to some
extent-- the MLP structure.
It turns out that even for $2$-layer MLPs, the computation of the Lipschitz
constant is $\NP$-hard.

\begin{problem}[$\lipcst$]
\label{problem:lipcst}
$\lipcst$ is the decision problem associated to the exact computation of the Lipschitz constant of a $2$-layer MLP with ReLU activation layers.
\BIT
\item[]\textbf{Input:} Two matrices $M_1\in\R^{l\times n}$ and $M_2\in\R^{m\times l}$, and a constant $\ell\geq 0$.
\item[]\textbf{Question:} Let $f = M_2 \circ \rho \circ M_1$ where $\rho(x) = \max\{0, x\}$ is the ReLU activation function. Is the Lipschitz constant $\lip{f} \leq \ell$ ?
\EIT
\end{problem}
\Theorem{lipcst.np.hardness} shows that, even for extremely simple neural
networks, exact Lipschitz computation is not achievable in polynomial time
(assuming that $\mathbf{P}\neq\NP$). The proof of
\Theorem{lipcst.np.hardness} is available in the supplemental material.
\begin{theorem}
  \label{th:lipcst.np.hardness}
  \Problem{lipcst} is $\NP$-hard.
\end{theorem}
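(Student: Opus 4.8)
The plan is to reduce from a known $\NP$-hard problem. The structure $f = M_2 \circ \rho \circ M_1$ with $\rho$ the ReLU means that the Jacobian at a point $x$ is $M_2 \,\diag(s)\, M_1$, where $s \in \{0,1\}^l$ is the vector indicating which coordinates of $M_1 x$ are positive. As $x$ ranges over $\R^n$, the sign pattern $s$ can take many (though not necessarily all $2^l$) values, and by \Theorem{definitions.lipschitz.grad} we have $\lip{f} = \max_s \|M_2 \,\diag(s)\, M_1\|_2$ over attainable patterns $s$. The first step is therefore to argue that, by a suitable choice of $M_1$ (for instance with $n = l$ and $M_1$ a scaled identity or a matrix whose rows point in all orthant directions), \emph{every} $s \in \{0,1\}^l$ is attainable, so that computing $\lip{f}$ is equivalent to the purely combinatorial problem $\max_{s \in \{0,1\}^l} \|M_2 \,\diag(s)\, M_1\|_2$.

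Next I would specialize to make the operator norm a quadratic form: taking $M_2$ to have a single row, or more cleanly taking $M_1 = M_2^\top = A$ (or choosing matrices so that $M_2 \diag(s) M_1$ reduces to $\diag(s) B \diag(s)$ for a fixed symmetric $B$), turns $\|M_2 \diag(s) M_1\|_2$ into something like $\max_{s} \| \diag(s) A \|_2 = \max_s \lambda_{\max}(A^\top \diag(s) A)$ or a bilinear expression $\max_{s,t \in \{0,1\}} s^\top B t$. The target is to land on a maximization of a quadratic/bilinear form over the Boolean cube, which is the canonical home of $\NP$-hardness; concretely I would aim for a reduction from \emph{Max-Cut} or from the problem of computing $\max_{x \in \{-1,1\}^n} x^\top Q x$ for a matrix $Q$ with nonnegative (or suitably signed) entries. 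The sign flip between $\{0,1\}$ and $\{-1,1\}$ is handled by the usual affine substitution $s = (x+\one)/2$, absorbing the linear and constant terms into an enlarged matrix via an extra coordinate.

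The main obstacle I anticipate is twofold. First, the ReLU only gives access to the \emph{nonnegative} diagonal entries $s \in \{0,1\}^l$, not to $\pm 1$; so the quadratic form being maximized is $s^\top Q s$ over the Boolean cube rather than the $\pm 1$ cube, and one must choose the gadget matrix $Q$ (equivalently $A$, $B$, $M_1$, $M_2$) so that this restricted maximization still encodes the $\NP$-hard instance — typically by re-centering and adding a dummy coordinate that is always active. Second, and more delicate, the operator norm $\|M_2 \diag(s) M_1\|_2$ is itself a maximal singular value, i.e.\ an inner maximization over a \emph{continuous} variable; I must make sure that this inner maximum, as a function of $s$, is exactly the combinatorial objective I want and does not accidentally collapse (e.g.\ by choosing $M_2$ with a single row, so that $\|M_2 \diag(s) M_1\|_2 = \| \diag(s) M_1^\top m_2 \|$-type expressions become transparent, or by a rank-one construction where the top singular value has a closed form). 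Once the reduction is in place, verifying that it is polynomial-time and that $\lip{f} \le \ell$ holds iff the source instance is a ``yes'' instance is routine. I would also note in passing that membership of the problem in $\NP$ is not claimed — only $\NP$-hardness — since certifying an upper bound on a maximal singular value over exponentially many sign patterns is not obviously in $\NP$.
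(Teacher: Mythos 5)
Your reduction is essentially the paper's: take $M_2$ with a single effective row so that $\|M_2\diag(s)M_1\|_2^2$ collapses to the convex quadratic form $\sum_i \langle h_i, s\rangle^2 = s^\top H s$ (with $h_i$ the columns of $M_1$), choose $M_1$ surjective so that every activation pattern $s\in\{0,1\}^l$ is attained, and thereby reduce from the $\NP$-hard problem of maximizing a positive semi-definite quadratic form over the hypercube --- the paper sources this from quadratic convex maximization over $[0,1]^n$ (Horst--Pardalos) rather than Max-Cut, and closes the $\{0,1\}$-versus-$[0,1]$ gap by noting a convex function attains its maximum at a vertex. The two obstacles you flag (attainability of all sign patterns, and making the inner singular-value maximization have a closed form via a rank-one construction) are exactly the two points the paper's gadget is built to handle, so your plan is on target.
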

\Theorem{lipcst.np.hardness} relies on a reduction to the $\NP$-hard problem of quadratic concave minimization on a hypercube by considering well-chosen matrices $M_1$ and $M_2$.

\section{AutoLip: a Lipschitz upper bound through automatic differentiation}
\label{sec:autolip}

Efficient implementations of backpropagation in modern deep learning libraries
such as \emph{PyTorch}~\citep{pytorch} or
\emph{TensorFlow}~\citep{tensorflow2015-whitepaper} rely on on the
concept of \emph{automatic differentiation} \cite{griewank2008evaluating, rall1981automatic}.
Simply put, automatic differentiation is a principled approach to the
computation of gradients and differential operators of functions resulting from
$K$ successive operations.

\begin{definition}
  \label{def:computable.function}
  A function $f:\R^n \rightarrow \R^m$ is \emph{computable in $K$ operations}
  if it is the result of $K$ \emph{simple} functions in the following way:
  $\exists (\theta_1, ...,\theta_K)$ functions of the input $x$ and $(g_1,
  \dots, g_K)$ where $g_l$ is a function of $(\theta_i)_{i \leq l-1}$
  such that:
  \begin{align}
    \theta_0(x) = x,\hspace{5ex} \theta_K(x) = f(x),\hspace{5ex}
    \forall k \in \set{1, K},\ \theta_k(x) = g_k(x, \theta_1(x), \dots,
    \theta_{k-1}(x))\,.
  \end{align}
\end{definition}

\begin{figure}[h]
  \centering
  \begin{tikzpicture}[>=latex,
  base/.style={circle, draw},
  theta/.style={rectangle, draw},
  cst/.style={ellipse, inner sep=1pt, draw}]
  \node[theta] (theta0) at (0, 1) {$\theta_0=x$};
  \node[theta] (theta1) at (2.5, 0) {$\theta_1=\theta_0/2$};
  \node[cst] (w) at (0, 2.8) {$\theta_2 = \omega$};

  \node[theta] (theta3) at (2.5, 2) {$\theta_3 = \sin(\theta_0)$};
  \node[theta] (theta4) at (5.5, 1) {$\theta_4 = \theta_1 - \theta_2\theta_3$};
  \node[theta] (theta5) at (7, 0) {$\theta_5 = \ln(1+e^{\theta_1})$};
  \node[theta] (theta6) at (8.8, 1) {$\theta_6 = |\theta_4|$};
  \node[theta] (theta7) at (12.5, 1) {$\theta_7 = \theta_5 + \theta_6$};

  \draw[->] (theta0) edge node[near end, below] {$g_1$} (theta1.west);
  \draw[->] (theta0) edge node[near end, below] {$g_3$} (theta3.west);
  \draw[->] (theta1) edge node[near end, above, xshift=3pt] {$g_5$} (theta5.west);
  \draw[->] (theta3) edge node[near end, below] {} (theta4.west);
  \draw[->] (theta1) edge node[near end, above, xshift=-5pt] {$g_4$} (theta4.west);
  \draw[->] (theta4) edge node[near end, above] {$g_6$} (theta6);
  \draw[->] (theta6) edge node[near end, above, xshift=4pt] {$g_7$} (theta7);
  \draw[->] (theta5) edge[bend right=10] (theta7.west);

  \draw[->] (w) edge[bend left] (theta4.north);

\end{tikzpicture}
  \caption{Example of a computation graph for $f_\omega(x) = \ln(1+e^{x/2}) + |x/2 - \omega \sin(x)|$.}
  \label{fig:comp_graph}
\end{figure}

We assume that these operations are all locally Lipschitz-continuous,
and that their partial derivatives $\partial_i g_k(x)$ can be computed and efficiently maximized.
This assumption is discussed in \Sec{liplayers} for the main operations used in neural networks.
When the function is real valued (\ie $m=1$), the backpropagation algorithm allows to compute
its gradient efficiently in time proportional to the number of operations $K$~\citep{linnainmaa1970representation}.
For the computation of the Lipschitz constant $\lip{f}$, a forward propagation through the computation graph is sufficient.
\begin{algorithm}[t]
  \caption{AutoLip}
  \label{alg:AutoLip}
  \begin{algorithmic}[1]
    \REQUIRE{function $f:\R^n\rightarrow\R^m$ and its computation graph
      $(g_1,...,g_K)$}
		\ENSURE{upper bound on the Lipschitz constant: $\hat{L}_{AL} \geq \lip{f}$}
			\STATE $\mathcal{Z} = \{(z_0,...,z_K)~:~\forall k\in\set{0,K}, \theta_k\mbox{ is
  constant}\Rightarrow z_k = \theta_k(0)\}$
    \STATE $L_0 \gets 1$
    \FOR{$k=1$ to $K$}
      \STATE $L_k \gets \sum\limits_{i=1}^{k-1}
      \max\limits_{z \in \mathcal{Z}}
      \|\partial_i g_k(z) \|_2 L_i$
    \ENDFOR
  \STATE \textbf{return} $\hat{L}_{AL} = L_k$
  \end{algorithmic}
\end{algorithm}
More specifically, the chain rule immediately implies
\begin{align}
  \jac_x\theta_k &= \sum_{i=1}^{k-1}  \partial_i g_k(\theta_0(x), \dots, \theta_{k-1}(x))
    \jac_x\theta_i\,,
\end{align}
and taking the norm then maximizing over all possible values of $\theta_i(x)$
leads to the \emph{AutoLip} algorithm described in \Alg{AutoLip}. This algorithm is an extension of the well known
product of operator norms for MLPs (see \eg \cite{miyato2018spectral}) to any function computable in $K$ operations.

\begin{proposition}
  \label{prop:upper.bound.spectral}
  For any MLP (see \Definition{MLP}) with $1$-Lipschitz activation functions
  (\eg ReLU, Leaky ReLU, SoftPlus, Tanh, Sigmoid, ArcTan or Softsign), the
  AutoLip upper bound becomes
\BEQS
\hat{L}_{AL} = \prod_{k=1}^K \|M_k\|_2.
\EEQS
\end{proposition}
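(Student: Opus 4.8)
The plan is to exploit the path-shaped computation graph of an MLP. Listing the elementary operations of $f_{MLP}$ in order as $T_1,\rho_1,T_2,\rho_2,\dots,\rho_{K-1},T_K$ and setting $\theta_0=x$, every node $\theta_k$ is a function of the single preceding node $\theta_{k-1}$ only. Hence in \Alg{AutoLip} the inner sum over $i<k$ collapses to the single term $i=k-1$, and the update becomes $L_k = s_k\,L_{k-1}$, where $s_k := \max_{z\in\mathcal{Z}}\|\partial_{k-1} g_k(z)\|_2$ is the maximal operator norm of the Jacobian of the $k$-th operation. With $L_0=1$ this gives $\hat{L}_{AL}=\prod_k s_k$, so it remains to evaluate $s_k$ for the two kinds of operations occurring in an MLP (there are $2K-1$ of them: $K$ affine layers and $K-1$ activation layers) and to collect the factors.

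First I would handle an affine operation $T_k\colon y\mapsto M_k y + b_k$: its Jacobian is the constant matrix $M_k$, so its factor is $\|M_k\|_2$ (the bias contributes nothing, being an additive constant). Next, for an activation layer $\rho_k\colon y\mapsto (g_k(y_i))_{i\in\set{1,n_k}}$ the Jacobian at $y$ is $\diag(g_k'(y_1),\dots,g_k'(y_{n_k}))$, whose operator norm equals $\max_i|g_k'(y_i)|$. Since the preceding node is not a constant node, its coordinates range over all of $\R^{n_k}$, so the factor is $\sup_{t\in\R}|g_k'(t)|$. Applying \Theorem{definitions.lipschitz.grad} to the scalar $1$-Lipschitz function $g_k$ gives $|g_k'(t)|\le 1$ almost everywhere, hence this factor is $\le 1$; and for the usual choices (\eg ReLU, Leaky ReLU, Tanh, ArcTan, Softsign, and in the limit SoftPlus) the derivative actually reaches the value $1$, so the factor is exactly $1$. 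Collecting the $K$ affine factors $\|M_1\|_2,\dots,\|M_K\|_2$ and the $K-1$ activation factors equal to $1$ in the telescoped product $\hat{L}_{AL}=\prod_k s_k$ leaves exactly $\hat{L}_{AL}=\prod_{k=1}^K\|M_k\|_2$.

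The book-keeping — the collapse of the sum on a path graph, the Jacobian of an affine map, and the telescoping — is routine. The one step requiring care is the activation layer: the bound $s_k\le 1$ is immediate from Rademacher's theorem applied in one dimension, but obtaining the \emph{equality} asserted by the proposition (rather than only the inequality $\hat{L}_{AL}\le\prod_k\|M_k\|_2$) forces one to check that $\sup_t|g_k'(t)|=1$, \ie that the nonlinearity is exactly $1$-Lipschitz and not merely $c$-Lipschitz for some $c<1$. This is precisely where the hypothesis on the activation functions is used; the remainder of the argument is oblivious to which $1$-Lipschitz nonlinearity appears.
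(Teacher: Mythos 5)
Your proof is correct and follows essentially the route the paper intends: the appendix contains no separate proof of this proposition, since it is meant to follow directly from collapsing the AutoLip recursion on the path-shaped computation graph of an MLP, with each affine layer contributing $\|M_k\|_2$ and each activation layer contributing $1$. Your care in distinguishing the inequality $\sup_t |g_k'(t)|\le 1$ from the equality needed for the stated formula is well placed --- it in fact exposes that for the Sigmoid (whose derivative is bounded by $1/4$) the literal AutoLip maximum is smaller than $1$, a minor imprecision in the proposition's list of activations rather than a gap in your argument.
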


Note that, when the intermediate function $\theta_k$
does not depend on $x$, it is not necessary to take a maximum over all possible
values of $\theta_k(x)$. To this end we define the set of feasible intermediate
values as
\BEQ
\mathcal{Z} = \{(z_0,...,z_K)~:~\forall k\in\set{0,K}, \theta_k\mbox{ is
  constant}\Rightarrow z_k = \theta_k(0)\},
\EEQ
and only maximize partial derivatives over this set.
In practice, this is equivalent to removing branches of the computation
graph that are not reachable from node $0$ and replacing them by constant
values.
To illustrate this
definition, consider a simple matrix product operation $f(x) = Wx$. One possible computation graph for $f$ is $\theta_0 = x$, $\theta_1 = W$ and
$\theta_2 = g_2(\theta_0, \theta_1) = \theta_1 \theta_0$. While the quadratic function $g_2$ is not
Lipschitz-continuous, its derivative \wrt $\theta_0$ is bounded by $\partial_0 g_2(\theta_0, \theta_1) = \theta_1 = W$. Since $\theta_1$ is
constant relatively to $x$, we have $\mathcal{Z} = \{(x, 0)\}$ and the
algorithm returns the exact Lipschitz constant $\hat{L}_{AL} = \lip{f} = \|W\|_2$.

\paragraph{Example.}
We consider the graph explicited on~\Fig{comp_graph}. Since $\theta_2$
is a constant \wrt $x$, we can replace it by its value $\omega$ in all other nodes. Then, the AutoLip algorithm runs as follows:
\BEQ
\hat{L}_{AL} = L_7 = L_6 + L_5 = L_1 + L_4 = 2L_1 + w L_3 = 1 + \omega.
\EEQ
Note that, in this example, the Lipschitz upper bound $\hat{L}_{AL}$ matches the exact Lipschitz constant $\lip{f_\omega} = 1 + \omega$.

\section{Lipschitz constants of typical neural network layers}
\label{sec:liplayers}

\paragraph{Linear and convolution layers.}
The Lipschitz constant of an affine function $f:x \mapsto Mx + b$ where $M\in\R^{m\times n}$ and $b\in\R^m$ is the largest singular value of its associated matrix $M$, which may be computed efficiently, up to a given precision, using the \emph{power method} \citep{mises1929praktische}. In the case of convolutions, the associated matrix may be difficult to access and high dimensional, hence making the direct use of the power method impractical.
To circumvent this difficulty, we extend the power method to any affine function on whose automatic differentiation can be used (\eg linear or convolution layers of neural networks) by noting that the only matrix multiplication of the power method $M^\top M x$ can be computed by differentiating a well-chosen function.

 \begin{lemma}
   Let $M\in\R^{m\times n}$, $b\in\R^m$ and $f:x \mapsto Mx + b$ be an affine
   function. Then, for all $x\in\R^n$, we have
   \BEQS
		\BA{lll}
     M^{\top}M x &=& \nabla g(x)\,,
   \EA
   \EEQS
	where $g(x) = \frac{1}{2} \|f(x) - f(0)\|_2^2$.
 \end{lemma}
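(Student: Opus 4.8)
The plan is to unwind the definition of $g$ and reduce everything to a single quadratic form in $x$, which can be differentiated directly. First I would observe that, since $f$ is affine with linear part $M$, the constant offset $b$ cancels in the difference: $f(x) - f(0) = (Mx + b) - (M\cdot 0 + b) = Mx$. This is the only "trick" in the statement — the subtraction of $f(0)$ is precisely what removes the bias term $b$ so that $g$ depends only on $M$.

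Next I would substitute this into the definition of $g$ to get
\BEQS
g(x) = \tfrac12 \|Mx\|_2^2 = \tfrac12 \scprod{Mx}{Mx} = \tfrac12\, x^\top M^\top M x.
\EEQS
This exhibits $g$ as a quadratic form associated with the symmetric positive semidefinite matrix $A := M^\top M$. The standard gradient identity for quadratic forms, $\nabla\big(\tfrac12 x^\top A x\big) = \tfrac12 (A + A^\top) x$, together with the symmetry $A^\top = (M^\top M)^\top = M^\top M = A$, then yields $\nabla g(x) = A x = M^\top M x$, which is exactly the claim. One can alternatively argue via the chain rule: with $h(y) = \tfrac12\|y\|_2^2$ we have $\nabla h(y) = y$ and $\jac_x(Mx) = M$, so $\nabla g(x) = M^\top \nabla h(Mx) = M^\top M x$.

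There is essentially no hard step here; the only thing to be slightly careful about is the symmetry of $M^\top M$ when passing from the derivative of the quadratic form to the stated formula, and making explicit that $f(x)-f(0)=Mx$ regardless of $b$. The practical point of the lemma — that $M^\top M x$ is obtained by one automatic-differentiation pass on the scalar function $g$, without ever forming $M$ — follows immediately once the identity is established, and is what makes the power method applicable to convolution layers.
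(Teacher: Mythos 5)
Your proposal is correct and follows exactly the same route as the paper's one-line proof: observe that $f(x)-f(0)=Mx$ so that $g(x)=\tfrac12\|Mx\|_2^2$, then differentiate the resulting quadratic form. You simply spell out the differentiation step (via the quadratic-form identity or the chain rule) that the paper leaves implicit.
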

\begin{proof}
By definition, $g(x) = \frac{1}{2}\|Mx\|_2^2$, and differentiating this equation leads to the desired result.
\end{proof}

\begin{algorithm}[h]
 \caption{AutoGrad compliant power method}
 \label{alg:autoPowerIteration}
 \begin{algorithmic}[1]
   \REQUIRE{affine function $f: \R^n\rightarrow\R^m$, number of iteration $N$}
   \ENSURE{approximation of the Lipschitz constant $\lip{f}$}
   \FOR{$k = 1$ to $N$}
     \STATE $v \gets \nabla g(v)$ where $g(x) = \frac{1}{2}\|f(x) - f(0)\|_2^2$
     \STATE $\lambda \gets \|v\|_2$
     \STATE $v \gets v/ \lambda$
   \ENDFOR
   \STATE \textbf{return} $\lip{f} = \|f(v) - f(0)\|_2$
 \end{algorithmic}
\end{algorithm}

The full algorithm is described in \Alg{autoPowerIteration}.
Note that this algorithm is fully compliant with any dynamic graph deep
learning libraries such as PyTorch.  The gradient of the square norm may be
computed through autograd, and the gradient of $\lip{f}$ may be computed the
same way without any more programming
effort.
Note that the gradients \wrt $M$ may also be computed with the closed form
formula $\nabla_M \sigma = u_1 v_1^{\top}$ where $u_1$ and $v_1$ are respectively
the left and right singular vector of $M$ associated to the singular
value $\sigma$~\citep{magnus1985differentiating}.
The same algorithm may be straightforwardly iterated to compute the $k$-largest
singular values.

\paragraph{Other layers.}
Most activation functions such as ReLU, Leaky ReLU, SoftPlus, Tanh, Sigmoid,
ArcTan or Softsign, as well as max-pooling, have a
Lipschitz constant equal to $1$.
Other common neural network layers such as dropout, batch normalization and
other pooling methods all have simple and explicit Lipschitz constants. We
refer the reader to \eg \cite{Goodfellow-et-al-2016} for more information on
this subject.

\section{Sequential neural networks}
\label{sec:sequential}
Despite its generality, AutoLip may be subject to large errors due to the
multiplication of smaller errors at each iteration of the algorithm. In this
section, we improve on the AutoLip upper bound by a more refined analysis of
deep learning architectures in the case of MLPs.
More specifically, the Lipschitz constant of MLPs have an explicit formula using \Theorem{definitions.lipschitz.grad} and the
chain rule:
\BEQ\label{eq:LipMLP}
\lip{f_{MLP}} = \sup_{x\in\R^n} \|M_K\diag(g'_{K-1}(\theta_{K-1}))M_{k-1}...M_2\diag(g'_1(\theta_1))M_1\|_2,
\EEQ
where $\theta_k = T_k \circ \rho_{k-1} \circ \cdots \circ \rho_1 \circ T_1(x)$ is the intermediate output after $k$ linear layers.

Considering \Proposition{upper.bound.spectral} and \Eq{LipMLP}, the equality $\hat{L}_{AL} =
\lip{f_{MLP}}$ only takes place if all activation layers
$\diag(g'_k(\theta_k))$ map the first singular vector of $M_k$ to the first
singular vector of $M_{k+1}$ by Cauchy-Schwarz inequality.
However, differential operators of activation layers, being diagonal matrices,
can only have a limited effect on input vectors, and in practice, first
singular vectors will tend to misalign, leading to a drop in the Lipschitz
constant of the MLP. This is the intuition behind \emph{SeqLip}, an improved
algorithm for Lipschitz constant estimation for MLPs.

\subsection{SeqLip, an improved algorithm for MLPs}
\label{sec:seqlip}

In \Eq{LipMLP}, the diagonal matrices $\diag(g'_{K-1}(\theta_{K-1}))$ are
difficult to evaluate, as they may depend on the input value $x$ and previous
layers. Fortunately, as stated in \Sec{liplayers}, most
major activation functions are $1$-Lipschitz. More specifically, these
activation functions have a derivative $g'_k(x)\in[0,1]$. Hence, we may replace
the supremum on the input vector $x$ by a supremum over all possible values:

\begin{align}\label{eq:ub_sigma}
\lip{f_{MLP}} \leq
  \max_{\forall i,\ \sigma_i \in [0, 1]^{n_i}} \|M_K \diag(\sigma_{K-1}) \cdots \diag(\sigma_1) M_1 \|_2\,,
\end{align}
where $\sigma_i$ corresponds to all possible derivatives of the activation gate.
Solving the right hand side of \Eq{ub_sigma} is still a hard problem, and the high dimensionality of the search space $\sigma\in[0,1]^{\sum_{i=1}^K n_i}$ makes purely combinatorial approaches prohibitive even for small neural networks. In order to decrease the complexity of the problem, we split the operator norm in $K-1$ parts using the
SVD decomposition of each matrix $M_i = U_i \Sigma_i
V_i^{\top}$ and the submultiplicativity of the operator norm:
\begin{align*}
  \begin{split}
    \lip{f_{MLP}} &\leq
    \max_{\forall i,\ \sigma_i \in [0, 1]^{n_i}} \|
    \Sigma_K V_K^{\top} \diag(\sigma_K)
    U_{K-1} \Sigma_{K-1} V_{K-1}^{\top}
    \diag(\sigma_{K-1})
    \dots
    \diag(\sigma_1) U_1 \Sigma_1 \|_2\,,
    \\
    & \leq \prod_{i=1}^{K-1} \max_{\sigma_i\in [0, 1]^{n_i}}
    \left\|
      \widetilde{\Sigma}_{i+1} V_{i+1}^{\top} \diag(\sigma_{i+1}) U_i
      \widetilde{\Sigma}_{i}
    \right\|_2\,,
  \end{split}
\end{align*}
where $\widetilde{\Sigma}_i = \Sigma_i$ if $i\in\{1, K\}$ and $\widetilde{\Sigma}_i =
\Sigma_i^{1/2}$ otherwise. Each activation layer can now be solved independently, leading to the \emph{SeqLip} upper bound:
\BEQ
\hat{L}_{SL} = \prod_{i=1}^{K-1} \max_{\sigma_i\in [0, 1]^{n_i}}
    \left\|
      \widetilde{\Sigma}_{i+1} V_{i+1}^{\top} \diag(\sigma_{i+1}) U_i
      \widetilde{\Sigma}_{i}
    \right\|_2\,.
\EEQ
When the activation layers are ReLU and the inner layers are small ($n_i\leq 20$), the gradients are $g_k'\in\{0,1\}$ and we may explore the entire search space $\sigma_i\in\{0,1\}^{n_i}$ using a brute force combinatorial approach.
Otherwise, a gradient ascent may be used by computing gradients via the power
method described in Alg.~2. In our experiments, we call this heuristic \emph{Greedy SeqLip}, and verified that the
incurred error is at most $1\%$ whenever the exact optimum is computable.
Finally, when the dimension of the layer is too large to compute a whole SVD,
we perform a low rank-approximation of the matrix $M_i$ by retaining the first
$E$ eigenvectors ($E = 200$ in our experiments).

\subsection{Theoretical analysis of SeqLip}
\label{sec:theory}
In order to better understand how SeqLip may improve on AutoLip, we now consider a simple setting in which all linear layers have a large difference between their first and second singular values.
For simplicity, we also assume that activation functions have a derivative
$g'_k(x) \in [0,1]$, although the following results easily generalize as long as the
derivative remains bounded.
Then, the following theorem holds.

\begin{theorem}\label{th:approxSeqlip}
Let $M_k$ be the matrix associated to the $k$-th linear layer, $u_k$ (resp. $v_k$) its first left (resp. right) singular vector, and $r_k = s_{k,2} / s_{k,1}$ the ratio between its second and first singular values. Then, we have
\BEQS
\hat{L}_{SL} \leq \hat{L}_{AL} \prod_{k=1}^{K-1} \sqrt{(1 - r_k -
  r_{k+1})\max_{\sigma\in[0,1]^{n_k}} \scprod{\sigma\cdot v_{k+1}}{u_k}^2 + r_k
  + r_{k+1} + r_k r_{k+1}}\,.
\EEQS
\end{theorem}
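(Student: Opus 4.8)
The plan is to bound the $k$-th factor of $\hat L_{SL}$, namely $\max_{\sigma\in[0,1]^{n_k}}\|N_k(\sigma)\|_2$ with $N_k(\sigma)=\widetilde\Sigma_{k+1}V_{k+1}^\top\diag(\sigma)U_k\widetilde\Sigma_k$, and then multiply the $K-1$ bounds. Since $\widetilde\Sigma_1=\Sigma_1$, $\widetilde\Sigma_K=\Sigma_K$, and $\widetilde\Sigma_i=\Sigma_i^{1/2}$ otherwise, the leading singular values telescope:
\[
\prod_{k=1}^{K-1}\|\widetilde\Sigma_{k+1}\|_2\,\|\widetilde\Sigma_k\|_2=\prod_{k=1}^{K-1}\widetilde s_{k+1,1}\widetilde s_{k,1}=s_{1,1}\,s_{K,1}\prod_{k=2}^{K-1}s_{k,1}=\prod_{k=1}^{K}\|M_k\|_2=\hat L_{AL},
\]
using \Proposition{upper.bound.spectral} for the last equality. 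Writing $r_k=s_{k,2}/s_{k,1}$, $\rho_j=\widetilde s_{j,2}/\widetilde s_{j,1}\le1$, and $\kappa_k=\max_{\sigma\in[0,1]^{n_k}}\scprod{\sigma\cdot v_{k+1}}{u_k}^2\le1$, it therefore suffices to prove that for every $\sigma\in[0,1]^{n_k}$,
\[
\|N_k(\sigma)\|_2^2\ \le\ \widetilde s_{k,1}^2\,\widetilde s_{k+1,1}^2\Big[\rho_k^2+(1-\rho_k^2)\rho_{k+1}^2+(1-\rho_k^2)(1-\rho_{k+1}^2)\kappa_k\Big].
\]

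The central step is an eigenvalue-domination bound. Write $\|N_k(\sigma)\|_2^2=\lambda_{\max}(N_kN_k^\top)$, where $N_kN_k^\top=\widetilde\Sigma_{k+1}V_{k+1}^\top\diag(\sigma)\,Q_k\,\diag(\sigma)V_{k+1}\widetilde\Sigma_{k+1}^\top$ with $Q_k=(U_k\widetilde\Sigma_k)(U_k\widetilde\Sigma_k)^\top$. The matrix $Q_k$ is symmetric PSD with eigenvectors the columns of $U_k$ and eigenvalues $\widetilde s_{k,j}^2$; its top eigenvalue is $\widetilde s_{k,1}^2$ with eigenvector $u_k$ and all others are at most $\widetilde s_{k,2}^2$, hence $Q_k\preceq \widetilde s_{k,2}^2 I+(\widetilde s_{k,1}^2-\widetilde s_{k,2}^2)u_ku_k^\top$. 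Substituting this into $N_kN_k^\top$ and using monotonicity and subadditivity of $\lambda_{\max}$ splits the bound into two parts. On the first part I use $\diag(\sigma)^2\preceq I$ and $V_{k+1}^\top V_{k+1}=I$ to get $\widetilde\Sigma_{k+1}V_{k+1}^\top\diag(\sigma)^2V_{k+1}\widetilde\Sigma_{k+1}^\top\preceq\widetilde\Sigma_{k+1}\widetilde\Sigma_{k+1}^\top$, whose top eigenvalue is $\widetilde s_{k+1,1}^2$. The second part is the rank-one matrix $(\widetilde s_{k,1}^2-\widetilde s_{k,2}^2)ww^\top$ with $w=\widetilde\Sigma_{k+1}V_{k+1}^\top\diag(\sigma)u_k$, and applying the same eigenvalue bound $V_{k+1}\widetilde\Sigma_{k+1}^\top\widetilde\Sigma_{k+1}V_{k+1}^\top\preceq\widetilde s_{k+1,2}^2 I+(\widetilde s_{k+1,1}^2-\widetilde s_{k+1,2}^2)v_{k+1}v_{k+1}^\top$ to $\|w\|_2^2$, together with $\|\diag(\sigma)u_k\|_2^2=\sum_i\sigma_i^2(u_k)_i^2\le\|u_k\|_2^2=1$ and $(v_{k+1}^\top\diag(\sigma)u_k)^2=\scprod{\sigma\cdot v_{k+1}}{u_k}^2\le\kappa_k$, gives
\[
\|N_k(\sigma)\|_2^2\le \widetilde s_{k,2}^2\,\widetilde s_{k+1,1}^2+(\widetilde s_{k,1}^2-\widetilde s_{k,2}^2)\big[\widetilde s_{k+1,2}^2+(\widetilde s_{k+1,1}^2-\widetilde s_{k+1,2}^2)\kappa_k\big],
\]
which is the displayed claim above after dividing by $\widetilde s_{k,1}^2\widetilde s_{k+1,1}^2$.

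It remains to compare the two brackets. Put $a=\rho_k^2\le r_k$ and $b=\rho_{k+1}^2\le r_{k+1}$, with equality whenever $i\notin\{1,K\}$ (then $\widetilde\Sigma_i=\Sigma_i^{1/2}$) and strict inequality for $i\in\{1,K\}$ since $r_i\le1$. First, for any common pair $(a,b)$ the difference $\big[(1-a-b)\kappa_k+a+b+ab\big]-\big[a+(1-a)b+(1-a)(1-b)\kappa_k\big]$ simplifies to $ab\,(2-\kappa_k)\ge0$ because $\kappa_k\le1$. Second, the map $(a,b)\mapsto(1-a-b)\kappa_k+a+b+ab$ is nondecreasing in each argument (its partial derivative in $a$ is $(1-\kappa_k)+b\ge0$), so replacing $(a,b)=(\rho_k^2,\rho_{k+1}^2)$ by $(r_k,r_{k+1})$ only increases it. Chaining these two facts yields $\rho_k^2+(1-\rho_k^2)\rho_{k+1}^2+(1-\rho_k^2)(1-\rho_{k+1}^2)\kappa_k\le (1-r_k-r_{k+1})\kappa_k+r_k+r_{k+1}+r_kr_{k+1}$. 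Taking square roots, multiplying over $k=1,\dots,K-1$, and invoking the telescoping identity $\prod_k\|\widetilde\Sigma_{k+1}\|_2\|\widetilde\Sigma_k\|_2=\hat L_{AL}$ completes the proof.

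The main obstacle is the eigenvalue-domination inequality $Q_k\preceq \widetilde s_{k,2}^2 I+(\widetilde s_{k,1}^2-\widetilde s_{k,2}^2)u_ku_k^\top$ and its twin on the $V_{k+1}$ side: it is precisely this step that simultaneously makes the alignment inner product $\scprod{\sigma\cdot v_{k+1}}{u_k}$ surface and isolates the spectral-gap ratios $r_k,r_{k+1}$. One must also be disciplined about which side the relation $\diag(\sigma)^2\preceq I$ is applied to, so that no spurious $\|\diag(\sigma)\|_2\le1$ factor is wasted, and the closing algebra—although elementary—is genuinely needed to land exactly on the stated expression: a cruder estimate obtained by expanding $\widetilde\Sigma_{k+1}V_{k+1}^\top\diag(\sigma)U_k\widetilde\Sigma_k$ into its four rank-structured cross-terms and applying the triangle inequality is strictly weaker than the claimed bound.
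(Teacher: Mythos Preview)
Your proof is correct and, in fact, somewhat cleaner than the paper's. The two arguments differ in how they bound a single factor $\|\widetilde\Sigma_{k+1}V_{k+1}^\top\diag(\sigma)U_k\widetilde\Sigma_k\|_2$.

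The paper proceeds by decomposing each diagonal matrix as $\widetilde\Sigma = s_1E_{11}+D$, expanding the product into four cross-terms, and invoking an orthogonality/block-structure argument to separate the $D\cdots D'$ piece from the rest; the remaining three terms are then bounded using the Parseval identity $\sum_{j>1}\scprod{v_1}{\sigma\cdot u_j}^2=\|\sigma\cdot v_1\|^2-\scprod{v_1}{\sigma\cdot u_1}^2$. This leads directly to the bracket $(1-\rt_1-\rt_2)\scprod{v_1}{\sigma\cdot u_1}^2+\rt_1+\rt_2+\rt_1\rt_2$ with $\rt_i=\rho_i^2$, after which the passage from $\rho_i^2$ to $r_i$ is handled exactly as you do.

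You instead work entirely in the Loewner order: the key domination $Q_k\preceq \widetilde s_{k,2}^2 I+(\widetilde s_{k,1}^2-\widetilde s_{k,2}^2)u_ku_k^\top$ (and its twin on the $V_{k+1}$ side) replaces the cross-term expansion, and monotonicity/subadditivity of $\lambda_{\max}$ replaces the paper's orthogonality claim. This yields the intermediate bracket $\rho_k^2+(1-\rho_k^2)\rho_{k+1}^2+(1-\rho_k^2)(1-\rho_{k+1}^2)\kappa_k$, which is actually \emph{tighter} than the paper's by exactly the $ab(2-\kappa_k)$ gap you compute, before you relax it to match the statement. The PSD route is more transparent about where each inequality is spent (no appeal to block orthogonality is needed), while the paper's expansion makes the role of the Parseval identity more visible. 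Both telescope the leading singular values in the same way to recover $\hat L_{AL}$.
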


Note that $\max_{\sigma\in[0,1]^{n_k}} \scprod{\sigma\cdot v_{k+1}}{u_k}^2 \leq 1$ and, when the ratios $r_k$ are negligible, then
\BEQ
\hat{L}_{SL} \leq \hat{L}_{AL} \prod_{k=1}^{K-1} \max_{\sigma\in[0,1]^{n_k}} \left|\scprod{\sigma\cdot v_{k+1}}{u_k}\right|.
\EEQ
Intuitively, each activation layer may align $u_k$ to $v_{k+1}$ only to a certain extent.
Moreover, when the two singular vectors $u_k$ and $v_{k+1}$ are not too
similar, this quantity can be substantially smaller than $1$.
To illustrate this idea, we now show that $\max_{\sigma\in[0,1]^{n_k}}
|\scprod{\sigma\cdot v_{k+1}}{u_k}|$ is of the order of $1/\pi$ if the two
vectors are randomly chosen on the unit sphere.

\begin{lemma}
\label{lem:misalign}
Let $x\geq 0$ and $u,v\in\R^n$ be two independent random vectors
taken uniformly on the unit sphere $\mathbb{S}^{n-1} =
\{x\in\R^n~:~\|x\|_2=1\}$. Then we have
\begin{equation*}
  \begin{tikzcd}[row sep=large, column sep=large]
    \displaystyle\max_{\sigma\in[0,1]^n} |\scprod{\sigma\cdot u}{v}| \arrow[r, "n \rightarrow
    +\infty"{below}]&
    \displaystyle\frac{1}{\pi} \hspace{3ex}\text{almost surely.}
  \end{tikzcd}
\end{equation*}
\end{lemma}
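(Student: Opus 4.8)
The plan is to compute the quantity $\max_{\sigma\in[0,1]^n} |\scprod{\sigma\cdot u}{v}|$ explicitly as a function of $u,v$, and then show that this function concentrates around $1/\pi$ when $u,v$ are uniform on the sphere. First I would observe that, since each coordinate $\sigma_i$ ranges independently over $[0,1]$ and the objective $\sum_i \sigma_i u_i v_i$ is linear in $\sigma$, the maximum is attained at a vertex of the cube: we take $\sigma_i=1$ when $u_iv_i>0$ and $\sigma_i=0$ otherwise, so that
\begin{equation*}
\max_{\sigma\in[0,1]^n} \scprod{\sigma\cdot u}{v} = \sum_{i=1}^n (u_i v_i)_+ = \sum_{i=1}^n \max\{0, u_i v_i\}.
\end{equation*}
By the same argument applied with $\sigma\cdot(-u)$, and using symmetry $u\sim -u$, it suffices to analyse $S_n := \sum_i (u_iv_i)_+$; the absolute value only affects things through a sign that vanishes in the limit.

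Next I would remove the coupling through the norm constraints. Writing $u = X/\|X\|$ and $v=Y/\|Y\|$ with $X,Y\sim\mathcal N(0,I_n)$ independent, we have $S_n = \frac{1}{\|X\|\|Y\|}\sum_{i=1}^n (X_iY_i)_+$. By the law of large numbers $\|X\|^2/n\to 1$ and $\|Y\|^2/n\to1$ almost surely, so it remains to show $\frac1n\sum_{i=1}^n (X_iY_i)_+ \to \frac1\pi$ almost surely. The summands are i.i.d.\ with finite mean, so this is the strong law of large numbers once I compute $\Exp{(X_1Y_1)_+}$ for independent standard Gaussians. That expectation is a short computation: condition on $Y_1$, use $\Exp{(cZ)_+} = |c|/\sqrt{2\pi}$ for $Z\sim\mathcal N(0,1)$, and then $\Exp{|Y_1|} = \sqrt{2/\pi}$, giving $\Exp{(X_1Y_1)_+} = \frac{1}{\sqrt{2\pi}}\cdot\sqrt{\frac2\pi} = \frac1\pi$. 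Combining the three almost-sure limits via Slutsky-type reasoning for a.s.\ convergence yields $S_n\to 1/\pi$ almost surely, and the same holds for the version with $-u$, hence for the max of the two, which is exactly $\max_{\sigma}|\scprod{\sigma\cdot u}{v}|$.

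I would organize the writeup as: (i) the vertex/linearity reduction to $\sum_i(u_iv_i)_+$; (ii) the Gaussian representation and reduction, via SLLN applied to norms, to the single scalar expectation; (iii) the computation $\Exp{(X_1Y_1)_+}=1/\pi$; (iv) assembling the almost-sure limit and handling the absolute value by symmetry. The main obstacle is largely bookkeeping rather than a deep difficulty: I need to be careful that the almost-sure convergence of the ratio $\frac{\sum_i (X_iY_i)_+}{\|X\|\,\|Y\|}$ follows cleanly from the three separate a.s.\ limits (numerator over $n$, and the two norm factors over $\sqrt n$), which is fine since a.s.\ convergence is preserved under continuous operations on a common probability space. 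A secondary point worth stating carefully is the exchange of the max over $\sigma$ with the limit in $n$ — but since for each fixed $n$ and each sample the max is the explicit closed form $\sum_i(u_iv_i)_+$, there is no interchange issue at all; the only genuine computation is the elementary Gaussian integral in step (iii).
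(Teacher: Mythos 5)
Your proposal is correct and follows essentially the same route as the paper's proof: reduce the maximum over the cube to $\sum_i (u_iv_i)^+$ (and its negative-part counterpart for the absolute value), pass to the Gaussian representation $u=X/\|X\|_2$, $v=Y/\|Y\|_2$, apply the strong law of large numbers to the numerator and to the norms, and compute $\Exp{(X_1Y_1)^+}=\tfrac12\Exp{|X_1|}\Exp{|Y_1|}=1/\pi$. No meaningful differences to report.
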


Intuitively, when the ratios between the second and first singular values are sufficiently small, each activation layer decreases the Lipschitz constant by a factor $1/\pi$ and
\BEQ
\hat{L}_{SL} \approx \frac{\hat{L}_{AL}}{\pi^{K-1}}.
\EEQ
For example, for $K=5$ linear layers, we have $\pi^{K-1} \approx 100$ and a
large improvement may be expected for SeqLip compared to AutoLip. Of course, in
a more realistic setting, the eigenvectors of different layers are not
independent and, more importantly, the ratio between second and first eigenvalues may not be sufficiently
small.
However, this simple setting provides us with the best improvement one
can hope for, and our experiments in \Sec{exps} shows that at least part of the
suboptimality of AutoLip is due to the misalignment of eigenvectors.

\section{Experimentations}
\label{sec:exps}

As stated in \Theorem{lipcst.np.hardness}, computing the Lipschitz constant is an $\NP$-hard
problem. However, in low dimension (\eg $d \leq 5$), optimizing the problem in
\Eq{optimization.lipschitz} can be performed efficiently using a simple grid search.
This will provide a baseline to compare the different estimation algorithms.
In high dimension,
grid search is intractable and we consider several other estimation methods:
(1) grid search for \Eq{optimization.lipschitz}, (2) simulated annealing for
\Eq{optimization.lipschitz}, (3) product of Frobenius norms of linear layers
\citep{miyato2018spectral}, (4) product of spectral norms
\citep{miyato2018spectral} (equivalent to AutoLip in the case of MLPs).
Note that, for MLPs with ReLU activations, first order optimization methods
such as SGD are not usable because the function to optimize in
\Eq{optimization.lipschitz} is piecewise constant.
Methods (1) and (2) return lower bounds while (3) and (4) return upper bounds
on the Lipschitz constant.

\paragraph{Ideal scenario.}
We first show the improvement of SeqLip over AutoLip in an ideal setting where
inner layers have a low eigenvalue ratio $r_k$ and uncorrelated leading
eigenvectors.
To do so, we construct an MLP with weight matrices $M_i = U_i\diag(\lambda)
V_i^\top$ such that $U_i, V_i$ are random orthogonal matrices and $\lambda_1 =
1, \lambda_{i>1} = r$ where $r\in[0, 1]$ is the ratio between first and second
eigenvalue. \Fig{seqlip.n.layers} shows the decrease of SeqLip as the number
of layers of the MLP increases (each layer has $100$ neurons).
The theoretical limit is tight for small eigenvalue ratio.
Note that the AutoLip upper bound is
always $1$ as, by construction of the network, all layers have a spectral radius equal to one.

\paragraph{MLP.} We construct a $2$-dimensional dataset from a Gaussian Process
with RBF Kernel with mean $0$ and variance $1$.
We use $15000$ generated points as a synthetic dataset. An example of such a
dataset may be seen in \Fig{gp.dataset}.
We train MLPs of several depths with $20$ neurons at each layer,
on the synthetic dataset with MSE loss and ReLU activations.
Note that in all simulations, the greedy SeqLip algorithm is within a
$0.01\%$ error compared to SeqLip, which justify
its usage in higher dimension.

\begin{figure}[t]
\begin{minipage}{.5\textwidth}
  \centering
  \includegraphics[width=1\linewidth]{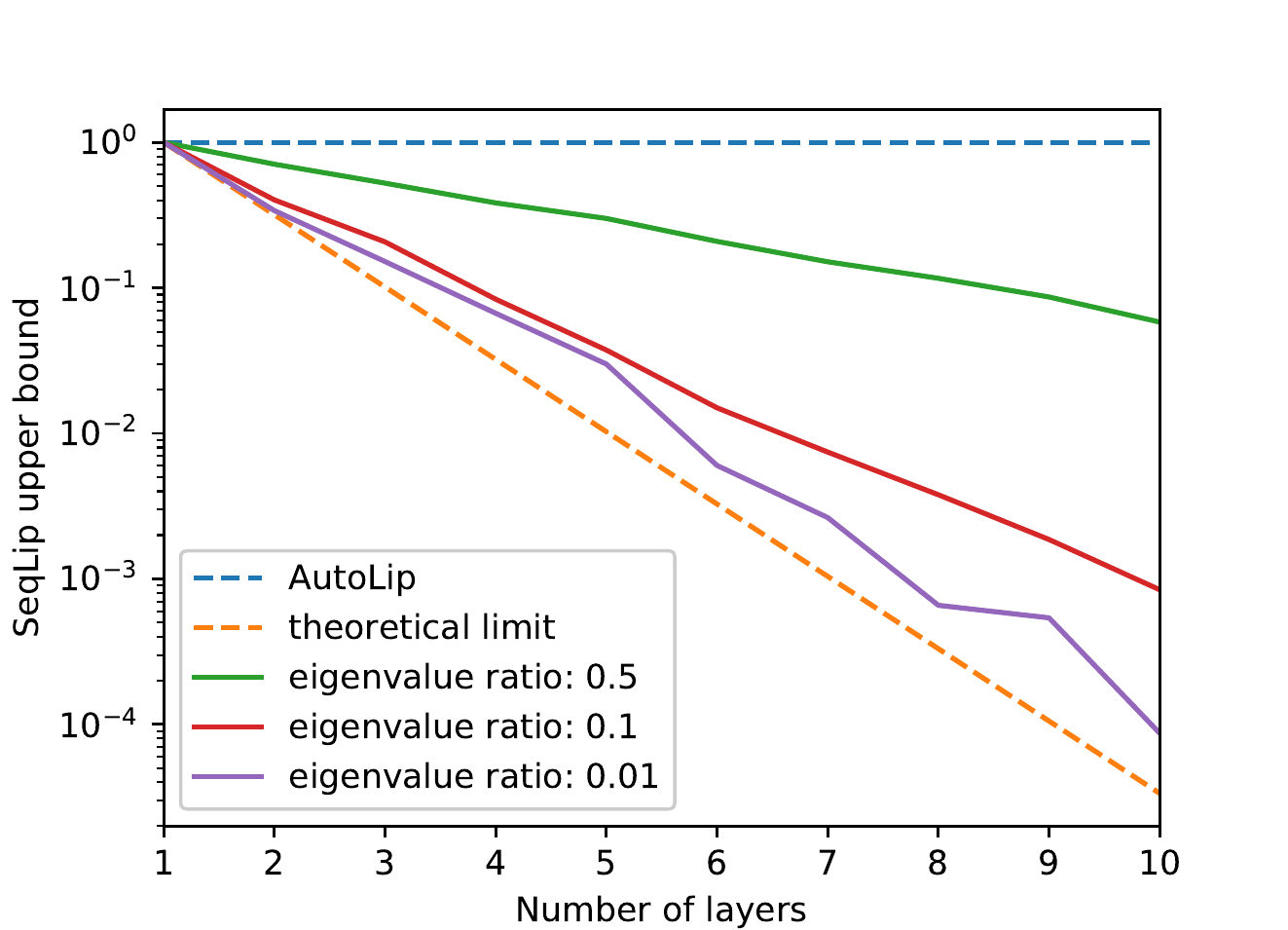}
  \caption{SeqLip in the ideal scenario.}
  \label{fig:seqlip.n.layers}
\end{minipage}%
\begin{minipage}{.5\textwidth}
  \centering
	\vspace{1.5em}
  \includegraphics[width=1\textwidth]{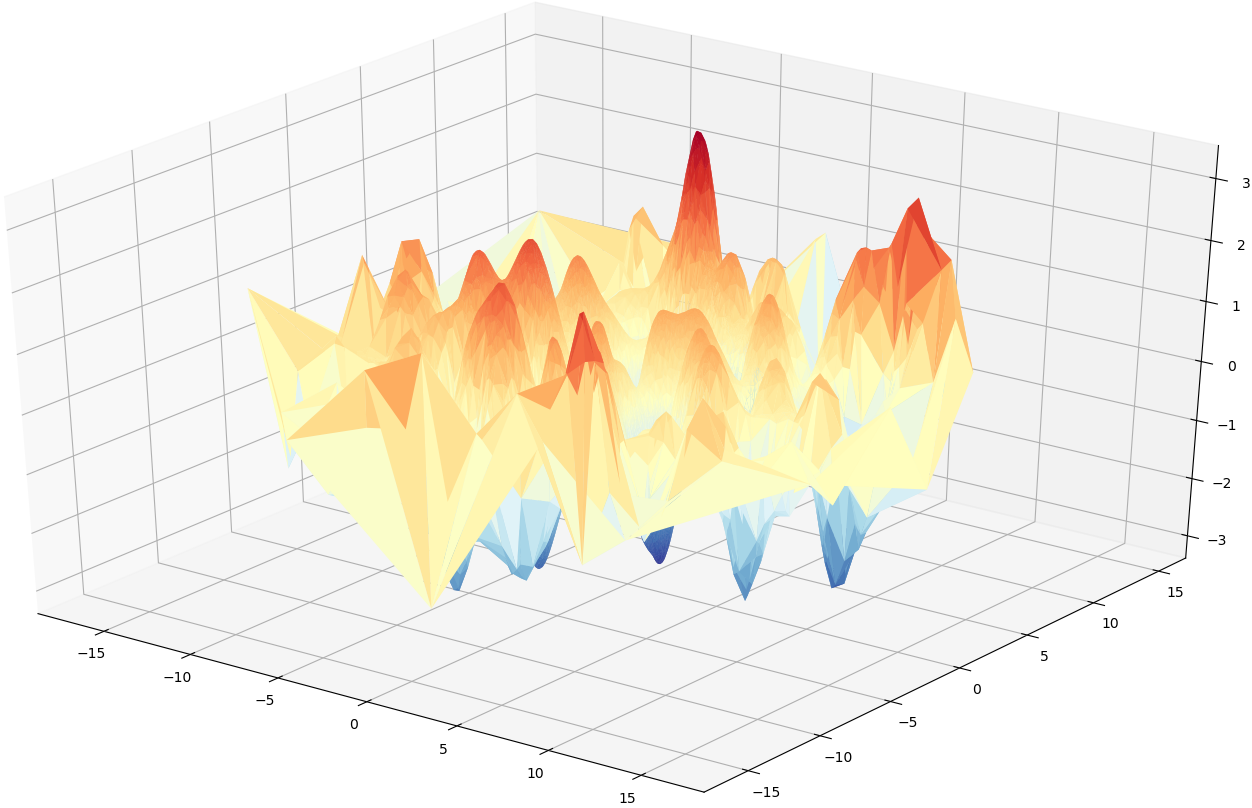}
  \caption{Synthetic function used to train MLPs.}
  \label{fig:gp.dataset}
\end{minipage}
\end{figure}

\begin{figure}[H]
  \centering
  \hspace{-12ex}
  \begin{align*}
    \begin{array}{c|cccc|ccc}
      & \multicolumn{4}{c}{\text{Upper bounds}} & \multicolumn{3}{|c}{\text{Lower bounds}}\\
      \text{\# layers} & \text{Frobenius} & \text{AutoLip} & \text{SeqLip} &\
      \text{Greedy SeqLip} & \text{Dataset} & \text{Annealing} & \text{Grid Search}\\
      \hline
      \hline
      4 & 648.2& 33.04 & 21.47 & 21.47 & 4.36 & 4.55 & 6.56\\
      5 & 4283.1 & 134.4 & 72.87 & 72.87 & 6.77 & 5.8 & 7.1\\
      7 & 22341 & 294.6 & 130.2 & 130.2 & 5.4 & 5.27 & 6.51\\
      10 & 7343800 & 19248.2 & 2463.44 & 2463.36 & 10.04 & 5.77 & 17.1
    \end{array}
  \end{align*}
  \caption{AutoLip and SeqLip for MLPs of various size.}
\end{figure}

First, since the dimension is low ($d =2$), grid search returns a very good
approximation of the Lipschitz constant, while
simulated annealing is suboptimal, probably
due to the presence of local maxima.
For upper bounds, SeqLip outperforms its competitors
reducing the gap between upper bounds and, in this case, the
true Lipschitz constant computed using grid search.

\paragraph{CNN.}

We construct simple CNNs with increasing number of layers that we train
independently on the MNIST dataset~\cite{lecun1998mnist}.
The details of the structure of the CNNs are given in the supplementary
material.
SeqLip improves by a factor of $5$ the upper bound given by AutoLip for the CNN
with $10$ layers.
Note that the lower bounds obtained with simulated annealing is probably too
low, as shown in the previous experiments.

\begin{figure}[H]
  \centering
  \hspace{-12ex}
  \begin{align*}
    \begin{array}{c|ccc|cc}
      & \multicolumn{3}{c}{\text{Upper bounds}} & \multicolumn{2}{|c}{\text{Lower bounds}}\\
      \text{\# layers} & \text{AutoLip} & \text{Greedy SeqLip} & \text{Ratio} & \text{Dataset} & \text{Annealing}\\
      \hline
      \hline
      4 & 174 & 86 & 2 & 12.64 & 25.5\\
      5 & 790.1 & 335 & 2.4 & 16.79 & 22.2\\
      7 & 12141 & 3629 & 3.3 & 31.22 & 43.6\\
      10 & 4.5\cdot 10^6 & 8.2 \cdot 10^5 & 5.4 & 38.26 & 107.8
    \end{array}
  \end{align*}
  \caption{AutoLip and SeqLip for MNIST-CNNs of various size.}
\end{figure}

\paragraph{AlexNet.}
AlexNet \citep{krizhevsky2012imagenet} is one of the first successes of deep
learning in computer vision.
The AutoLip algorithm finds that the Lipschitz constant is upper bounded by
$3.62 \times 10^7$ which remains extremely large and probably well above the
true Lipschitz constant.
As for the experiment on a CNN, we use the $200$ highest singular values of
each linear layer for Greedy SeqLip.
We obtain $5.45 \times 10^6$ as an upper bound approximation, which remains
large despite its $6$ fold improvement over AutoLip.
Note that we do not get the same results as~\cite[Section
4.3]{szegedy2013intriguing} as we did not use the same weights.

\section{Conclusion}
\label{sec:discussion}

In this paper, we studied the Lispchitz regularity of neural networks. We first
showed that exact computation of the Lipschitz constant is an $\NP$-hard
problem. We then provided a generic upper bound called \emph{AutoLip} for the
Lipschitz constant of any automatically differentiable function.
In doing so, we introduced an algorithm to compute singular values of affine
operators such as convolution in a very efficient way using \emph{autograd}
mechanism.
We finally proposed a refinement of the previous method for MLPs called \emph{SeqLip} and showed how this algorithm can improve on AutoLip theoretically and
in applications, sometimes improving up to a factor of $8$ the AutoLip upper
bound.
While the AutoLip and SeqLip upper bounds remain extremely large for neural networks of the computer vision literature (e.g. AlexNet, see \Sec{exps}), it is yet an open question to know
if these values are close to the true Lipschitz constant or substantially
overestimating it.

\subsubsection*{Acknowledgements}

The authors thank the whole team at Huawei Paris and in particular Igor Colin,
Moez Draief, Sylvain Robbiano and Albert Thomas for useful discussions and
feedback.

\bibliography{biblio}
\bibliographystyle{unsrt}

\newpage
\appendix
\section{Proof of \Theorem{lipcst.np.hardness}}
\label{appendix:lipcst.np.hardness}
We reduce the problem of maximizing a quadratic convex function on a hypercube
to $\lipcst$.
Start from the following $\NP$-hard problem~\citep[Quadratic Optimization, Section 4]{horst2013handbook}:
\begin{align}
  \label{eq:quadr.cvx}
  \begin{array}{cc}
    \displaystyle\maximize_{\sigma} & \sum_i (h_i^{\top}\sigma)^{2} = \sigma^{\top} H \sigma\\
    \st & \forall k,\ 0 \leq \sigma_k \leq 1\,,
  \end{array}
\end{align}
where $H = \sum_i h_i h_i^{\top}$ is a positive semi-definite matrix with full
rank.
Let's note
\begin{align*}
  M_1 = \left(
      \begin{array}{c|c|c|c}
        &&&\\
        h_1 & h_2 & \cdots & h_n\\
        &&&
      \end{array}
      \right)
      ,\hspace{3ex}
      M_2 = \left(
        \begin{array}{c|ccc}
          1&&&\\
          \vdots&&0&\\
          1&&&
        \end{array}
      \right)^\top\,,
\end{align*}
so that we have
\begin{align*}
  M_2 \diag(\sigma) M_1 &=
  \left(
    \begin{array}{c|ccc}
      h_1^{\top} \sigma &&&\\
      \vdots&&0&\\
      h_n^{\top} \sigma &&&\\
    \end{array}\
  \right)^\top\,.
\end{align*}
The spectral norm of this $1$-rank matrix is $\sum_i (h^{\top}_i \sigma)^2$.
We proved that \Eq{quadr.cvx} is equivalent to the following
optimization problem
\begin{align}
  \label{eq:mlp.optimization.lipschitz}
  \begin{array}{c c}
    \maximize\limits_{\sigma} &
    \|M_2 \diag(\sigma) M_1\|_2^2\\
    \st &
    \sigma \in [0, 1]^{n}\,.\\
  \end{array}
\end{align}

We recover the exact formulation of \Sec{sequential} \Eq{LipMLP} for a $2$-layer MLP (the reader
can verify there is no recursive loop).
Because $H$ is full rank, $M_1$ is surjective and all $\sigma$ are admissible
values for $g_i'(x)$ which is the equality case.
Finally, ReLU activation units take their derivative within $\{0, 1\}$ and
\Eq{mlp.optimization.lipschitz} is its relaxed optimization problem, that has
the same optimum points.

\section{Proof of \Theorem{approxSeqlip}}
Consider a single factor
$\left\| \Sigmat V \diag(\sigma) U^{\top}\Sigmat'\right\|_2$ with $V$ and
$U$ unitary matrices and $\Sigmat$ (resp. $\Sigmat'$) is diagonal with eigenvalues $(s_k)_k$
(resp. $(s'_j)_j$) in
decreasing order along the diagonal. Decompose the eigenvalue matrices as $\Sigmat =
s_1 E_{11} + D$ and $\Sigmat' = s'_1 E'_{11} + D'$, by orthogonality we can write
\begin{align}
  \left\|\Sigmat V \diag(\sigma) U^{\top}\Sigmat'\right\|_2^2 \leq&
  \Big\|s_1 E_{11} V \diag(\sigma) U^{\top} E'_{11} s'_1\\
    &+ s_1 E_{11} V_i \diag(\sigma) U^{\top} D'\nonumber\\
    &+ D\, V \diag(\sigma) U^{\top} E'_{11} s'_1\Big\|_2^2\nonumber\\
    &+ \left\|D\, V \diag(\sigma) U^{\top} D'\right\|_2^2\,.
\end{align}
First we can bound (4) $\leq (s_2 s'_2)^2$.
For (3) denote $v_k$ (resp. $u_k$) the $k$-th column of $V$
(resp. of $U$). It follows that
\begin{align*}
  \text{(3)} & \leq
  (s_1 s'_1)^2 \scprod{v_1}{\sigma \cdot u_1}^2
  + \sum_{j > 1} (s_1 s'_j)^2 \scprod{v_1}{\sigma \cdot u_j}^2
      + \sum_{k > 1} (s_k s'_1)^2 \scprod{v_k}{\sigma \cdot u_1}^2\,.
\end{align*}
The columns $(v_k)_k$ of $V$ form an orthonormal basis so we have
\begin{align*}
  \sum_{k > 1} \scprod{v_k}{\sigma \cdot u_1}^2 &= \|\sigma \cdot u_1\|^2 -
  \scprod{v_1}{\sigma \cdot u_1}^2\,,
\end{align*}
and we deduce a similar equality for
$\sum_{j > 1} \scprod{v_1}{\sigma \cdot u_j}^2$.
Using $s_k \leq s_2$ for $k > 1$ we finally obtain
\begin{align*}
  \text{(3)} & \leq
    (s_1 s'_1)^2
    \left(
      \scprod{v_1}{\sigma \cdot u_1}^2 \left(
        1 - \rt_1 - \rt_2 \right) + \rt_1 + \rt_2
    \right),
\end{align*}
with $\rt_1 = (\frac{s_2}{s_1})^2$ and $\rt_2 = (\frac{s'_2}{s'_1})^2$.
In conclusion we proved the following inequality:
\begin{align*}
  \left\|\Sigmat V \diag(\sigma) U^{\top}\Sigmat'\right\|_2^2 & \leq
    (s_1 s'_1)^2
    \left(
      \left(
        1 - \rt_1 - \rt_2 \right)\scprod{v_1}{\sigma \cdot u_1}^2  + \rt_1 + \rt_2 +
      \rt_1 \rt_2
    \right).
\end{align*}

The Lipschitz upper bound given by AutoLip of
$\left\| \Sigmat_1 V \diag(\sigma) U^{\top}\Sigmat_2\right\|_2$
is $s_{1} s'_{1}$. For the middle layers, we have
$\Sigmat = \Sigma^{1/2}$,
and the inequality still holds for the first and last layer due to $\rt_i
\leq \frac{s_2}{s_1}$; taking the maximum for $\sigma$ leads to the theorem.

\section{Proof of \Lemma{misalign}}
Let $U,V\sim\mathcal{N}(0,I_n)$ be two independent $n$-dimensional Gaussian random vectors. Then, $u = U/\|U\|_2$ and $v = V/\|V\|_2$ are uniform on the unit sphere $\mathcal{S}^{n-1}$, and
\BEQ
\BA{lll}
\vspace{1em}\displaystyle\max_{\sigma\in[0,1]^n} \left|\scprod{\sigma\cdot u}{v}\right| &=& \displaystyle\max_{\sigma\in[0,1]^n}  \left|\sum_{i=1}^n \sigma_i u_i v_i \right|\\
&=&
\displaystyle\max\left\{\sum_{i=1}^n (u_i v_i)^+, \sum_{i=1}^n (u_i v_i)^-\right\},
\EA
\EEQ
where $x^+ = \max\{0,x\}$ and $x^- = \max\{0,-x\}$ are respectively the positive and negative parts of $x$. Note that $\sum_{i=1}^n (u_i v_i)^+$ and $\sum_{i=1}^n (u_i v_i)^-$ have the same law, since the distribution of $u$ and $
v$ is symmetric \wrt the coordiante axes. Moreover, we may rewrite
\BEQ
\sum_{i=1}^n (u_i v_i)^+ = \frac{\frac{1}{n}\sum_{i=1}^n (U_i V_i)^+}{\sqrt{\frac{1}{n}\sum_{i=1}^n U_i^2}\sqrt{\frac{1}{n}\sum_{i=1}^n V_i^2}},
\EEQ
and each term converges almost surely to its expectation due to the strong law of large numbers. Finally, noting that $\Exp{U_i^2} = \Exp{V_i^2} = 1$ and
\BEQ
\Exp{(U_i V_i)^+} = \frac{1}{2}\Exp{|U_i V_i|} = \frac{1}{2}\Exp{|U_i|}\Exp{|V_i|} = \frac{1}{\pi},
\EEQ
leads to the desired result.

\section{Convolutional Neural Network of~\Sec{exps}}
\label{appendix:cnn.mnist}

For each model of depth $n$, convolution except the last one are followed by a ReLU activation
unit.

\begin{center}
\begin{tabular}{cc|cccc}
  \# Layer & Layer & \# channels out & kernel & stride & padding\\
  \hline
  \hline
  $1$ & Conv2D + bias & 32 & $(5, 5)$ & 2 & 0\\
  $2$ & Conv2D + bias & 64 & $(3, 3)$ & 2 & 0\\
  $3$ & Conv2D + bias & 64 & $(3, 3)$ & 1 & 1\\
  $\vdots$ & $\vdots$ & $\vdots$ & $\vdots$ & $\vdots$ & $\vdots$ \\
  $\vdots$ & Conv2D + bias & 64 & $(3,3)$ & 1 & 1\\
  $\vdots$ & $\vdots$ & $\vdots$ & $\vdots$ & $\vdots$ & $\vdots$ \\
  $n-1$ & Conv2D + bias & 128 & $(3, 3)$ & 2 & 0\\
  $n$ & Conv2D + bias & 10 & $(2, 2)$ & 1 & 0\\

\end{tabular}
\end{center}

\end{document}